\author{Sagar Shrestha, ~Rajesh Shrestha, ~Tri Nguyen, ~ and ~ Subash Timilsina}
\definecolor{orange}{RGB}{255,107,0}
\definecolor{green}{RGB}{50,170,50}
\definecolor{purple}{RGB}{255,0,255}
\newtheorem{Fact}{Fact}
\title{Distribution Matching via Generalized Consistency Models}
\date{February 2024}
\newcommand{\f}{\boldsymbol{f}}
\renewcommand{\v}{\boldsymbol{v}}
\newcommand{\g}{\boldsymbol{g}}
\newcommand{\h}{\boldsymbol{h}}
\newcommand{\s}{\boldsymbol{s}}
\newcommand{\x}{\boldsymbol{x}}
\newcommand{\z}{\boldsymbol{z}}
\newcommand{\J}{\boldsymbol{J}}
\newcommand{\A}{\boldsymbol{A}}
\newcommand{\cL}{\mathcal{L}}
\newcommand{\cN}{\mathcal{N}}
\newcommand{\bbR}{\mathbb{R}}
\newcommand{\bbE}{\mathbb{E}}
\newtheorem{proposition}{Proposition}
\DeclareMathOperator*{\minimize}{\textrm{minimize}}
\definecolor{shadecolor}{RGB}{220,220,220}
\newcommand{\norm}[1]{\left \lVert #1  \right \rVert}
\newcommand{\cG}{\mathcal{G}}
\newcommand{\bm}{\boldsymbol}
\begin{document}

\maketitle

\begin{abstract}
Recent advancement in generative models have demonstrated remarkable performance across various data modalities. Beyond their typical use in data synthesis, these models play a crucial role in distribution matching tasks such as latent variable modeling, domain translation, and domain adaptation. Generative Adversarial Networks (GANs) have emerged as the preferred method of distribution matching due to their efficacy in handling high-dimensional data and their flexibility in accommodating various constraints. However, GANs often encounter challenge in training due to their bi-level min-max optimization objective and susceptibility to mode collapse. In this work, we propose a novel approach for distribution matching inspired by the consistency models employed in Continuous Normalizing Flow (CNF). Our model inherits the advantages of CNF models, such as having a straight forward norm minimization objective, while remaining adaptable to different constraints similar to GANs. We provide theoretical validation of our proposed objective and demonstrate its performance through experiments on synthetic and real-world datasets.
\end{abstract}

\section{Introduction}
Continuous Normalizing Flow (CNF) \cite{chen2018neural, lipman2022flow, liu2022flow, albergo2023stochastic} is a class of generative models based on continuous time evolution of probability density from the source distribution to target distribution. Diffusion model is an instance of CNF that connects a simple source distribution to a given target distribution via the density path prescribed by the diffusion process \cite{song2020score}. Diffusion model has established itself as the state-of-the-art method for data synthesis in many modalities such as images, videos, audio, protein structures, etc. Nevertheless, the slow iterative sampling procedure and rigid diffusion path have motivated a plethora research focused on efficient sampling and flexible density path design \cite{song2023consistency, dou2024unified, lipman2022flow}. Flow matching \cite{lipman2022flow, liu2022flow, albergo2023stochastic} has emerged as a promising direction that generalizes diffusion models to allow for flexible density path, such as optimal transport path, and arbitrary source distribution. 

Apart from data synthesis, a common application of generative model is to match distribution between two or more random variables. Such distribution matching problems arise ubiquitously in machine learning, e.g., domain translation, domain adaptation, latent variable models, inverse problems, etc. Such distribution matching tasks are often tackled using moment matching \cite{li2015generative}, generative adversarial networks (GAN) \cite{goodfellow2014generative} etc. GANs stand as the preferred method for distribution matching because of its success in high dimensional data and high sample quality. Nonetheless, GANs present a difficult min-max optimization problem which has been known to be notoriously difficult to optimize \cite{salimans2016improved}. Moreover, GANs suffer from mode collapse issue, that results in poor sample diversity, which can be detrimental to some applications. 

Although recent CNFs (diffusion and flow matching) provide an attractive quadratic minimization objective with stable and scalable training, it is not clear how one can use CNFs as a distribution matching tool like GANs. The reason is two-fold: (i) CNFs learn a unique mapping from source to target of the same dimensions specified by the choice of vector field (e.g., diffusion), (ii) CNFs do not provide any (pseudo) measure of distribution divergence which can be used as a signal for optimization. In contrast, GANs can learn any feasible mapping between the source and target domains.

In this work, we propose a method for distribution matching with CNFs. By leveraging the recently proposed consistency models that learn a one-step generative model based on CNF paths, our method results in a quadratic objective for general distribution matching tasks. Theoretical analysis is presented to show the correctness of the proposed objective. Some small scale synthetic and real data experiments are presented to validate our claims.  



\section{Background}

\subsection{Continuous Normalizing Flow (CNF)}

Given a source distribution $\rho_0$ defined on $\bbR^N$ and a target distribution $\rho_1$ also on $\bbR^N$, a CNF attempts to find a time-dependent and time-differentiable mapping $\bm \psi_t: \bbR \times \bbR^N \to \bbR^N, ~t \in [0,1]$, termed as ``flow'', such that $[\bm \psi_1]_{\# \rho_0} = \rho_1$ and $[\bm \psi_0]_{\# \rho_0} = \rho_0$. Here, the notation $[\bm \psi_t]_{\# \rho_0}$ denotes the push-forward distribution of $\rho_0$ via $\bm \psi_t$, i.e., a random variable $\x \sim \rho_0$ follows $\bm \psi_t(\x) \sim [\bm \psi]_{\# \rho_0}$. The time-derivative of the flow $\bm \psi_t$ is a time-dependent vector field $\v_t: \bbR^N \to \bbR^N$, defined as follows:
\begin{align}
    \v_t(\bm \psi_t(\x)) = \frac{d}{dt} \bm \psi_t(\x).
\end{align}
The time-dependent flow $\bm \psi_t$ results in a time-dependent transformed distribution. This will induce a time-dependent density path $\bm \rho_t := [\bm \psi_t]_{\# \rho_0}$. 

Recently popular CNFs are diffusion models and flow matching. Generally, these CNFs estimate $\v_t$ in some form, such that samples from $\rho_1$ (or $\rho_0$) can be generated by following the vector field $\v_t$ (or $-\v(t)$) as follows:
\begin{align}\label{eq:sampling_with_v}
    \x_1 = \x_0 + \int_0^1 \v_t(\x_t) dt.
\end{align}


\subsection{Diffusion and Flow Matching}

\textbf{Diffusion Models.}
A diffusion model starts from clean data density $\rho_1$, progressively perturbs the data with Gaussian noise to obtain noisy data density $\rho_t$ at time $t$, such that the final density $\rho_0$ is a pure Gaussian noise. There are many variants of diffusion paths $\rho_t$ depending upon the noise adding schedules, such as variance preserving (VP), variance exploding (VE), and sub-VP \cite{song2020score}. Here, we present the widely used VE variant \cite{karras2022elucidating}. The density-path traced by this diffusion process is:
\begin{align}
    \rho_t(\x) = (\rho_1 \ast \cN(0,\sigma^2(t) \bm I))(\x)
\end{align}
where $\ast$ denotes the convolution operator and $\sigma(t): [0,1] \to \bbR$ is a monotonically decreasing function such that $\sigma(1) = 0$, and $\sigma(0)$ is large enough that $\rho_1 \ast \cN(0,\sigma^2(0) \bm I) \approx \cN(0, \sigma^2(0) \bm I)$. A vector field that follows the density path prescribed by diffusion model is called the probability flow ODE and is expressed as follows:
\begin{align}\label{eq:diffusion_vf}
    \v_t(\x) = \sigma(t) \nabla_{\x} \log \rho_t(\x),
\end{align}
where $\nabla_{\x} \log \rho_t(\x)$ is called the ``score'' function. Diffusion models estimate the score function $\nabla_{\x} \log \rho_t(\x)$, using which one can sample from $\rho_1$ following \eqref{eq:sampling_with_v}. There also exists a closed-form expression of the score function as follows:
\begin{align}
    \nabla_{\x} \log \rho_t(\x) = \bbE_{\substack{\x_1 \sim \rho_1, \\ \epsilon \sim \cN(0, \bm I), \\ \widetilde{\x}_t = \x_1 + \sigma(t)\epsilon }} \left[ \frac{\widetilde{\x}_t - \x_1}{\sigma^2(t)} \Big| \widetilde{\x}_t = \x \right].
\end{align}

\noindent\textbf{Flow Matching.}
Flow matching generalizes diffusion models by allowing $\rho_0$ to be arbitrary density (in contrast to Gaussian in diffusion models), and following arbitrary path determined by a time-differentiable function $\J_t: \bbR^N \to \bbR^N$ (termed as stochastic interpolant) that satisfies:
\begin{align}
    \J_0(\x_0, \x_1) = \x_0, \quad \J_1(\x_0, \x_1) = \x_1.
\end{align}
If one samples $(\x_0, \x_1) \sim \rho(\x_0, \x_1)$, where $\rho(\x_0, \x_1)$ is a coupling distribution whose marginals are $\rho_0$ and $\rho_1$, then $\J_t(\x_0, \x_1)$ is a random variable whose density $\rho_t$ satisfies $\rho_{t=0} = \rho_0$ and $\rho_{t=1} = \rho_1$.
The vector field $\v_t$ corresponding to $\J_t$ and thus $\rho_t$ is given by 
\begin{align}\label{eq:fm_vf}
    \v_t(\x) = \bbE_{\substack{(\x_0, \x_1) \sim \rho}} \left[ \partial_t \J_t(\x_0, \x_1) \Big| \J_t(\x_0, \x_1) = \x \right]
\end{align}
where, $\partial_t$ denotes the partial derivative with respect to time i.e. $\frac{\partial}{\partial t}$.

Since the vector field $\v_t$ in \eqref{eq:fm_vf} is completely determined by $\rho$ and $\J_t$. Therefore, we use the notation $\v_t^{(\rho, \J)}$ to completely specify the vector field in consideration.

\subsection{Consistency Models}
Given any point $\x_0 \sim \rho_0$, consider the trajectory $\{ \x_t\}_{t \in [0,1]}$ given by 
\begin{align}
    \x_{t'} = \x_{t}+ \int_{t}^{t'} v_s(x_s) ds, \quad \forall t, t' \in [0,1]
\end{align}
Then $\x_1$ is the corresponding sample from $\rho_1$ obtained by following the learnt vector field $\v_t$. 
To address the slow sampling process of diffusion models, consistency models \cite{song2023consistency} was proposed to allow fast inference by estimating a function $\f_t: \bbR^N \to \bbR^N$ that can directly output the corresponding clean image $\x_1$ given any noisy image $\x_t$ along the trajectory. Given a score model $\s_t(\x)$ (estimate of $\nabla_{\x} \log \rho_t(\x)$), consistency model estimates $\f_t$ using the following optimization problem:
\begin{align}\label{eq:cm distillation}
    \min_{\f_t} ~ \cL_{\rm CM}(\f_t) := \bbE_{\substack{\x_1 \sim \rho_1, \bm \epsilon \sim \cN(0,1),\\ \widetilde{\x}_t= \sigma(t) \bm \epsilon + \x_1}} & \norm{\f_{t}\left(\widetilde{\x}_t\right)- \f_{t + \Delta t}\left(\widetilde{\x}_t +  \sigma(t) \s_t(\widetilde{\x}_t)\Delta t\right)}_2^2
\end{align}
For $\Delta t \to 0$, it was shown that the minimizer of $\cL_{\rm CM}$, $\f_t^\star$ is such that $\f^\star_t(\x_t) = \x_1$.

Problem \eqref{eq:cm distillation} requires a pre-trained score model $\s_t$. However, it was shown that one can use a sample estimate of the score given in \eqref{eq:diffusion_vf} in \eqref{eq:cm distillation}. A sample estimate of the score, $\widehat{\s_t}(\widetilde{\x}_t)$, follows the following sampling procedure:
\begin{align}
    \x_1 \sim \rho_1, ~ \bm \epsilon \sim \cN(0,\bm I), ~ \widetilde{\x}_t = \x_1 + \sigma(t) \bm \epsilon, ~~ \widehat{\s_t}(\widetilde{\x}_t) = \frac{\widetilde{\x}_t -\x_1}{\sigma^2(t)}.
\end{align}
Hence, the consistency loss becomes:
\begin{align}\label{eq:cm training}
    \min_{\f_t} ~ \widehat{\cL}_{\rm CM}(\f_t) := \bbE_{\substack{\x_1 \sim \rho_1, \bm \epsilon \sim \cN(0,1),\\ \widetilde{\x}_t= \sigma(t) \bm \epsilon + \x_1}} & \norm{\f_{t}\left(\widetilde{\x}_t\right)- \f_{t + \Delta t} \left(\widetilde{\x}_t +  \sigma(t) \widehat{\s}_t(\widetilde{\x}_t)\Delta t\right)}_2^2
\end{align}
It was shown in \cite{song2023consistency} that $\widehat{\cL}_{\rm CM}$ achieves the same minimizer as $\cL_{\rm CM}$ for $\Delta t \to 0$.

One can generalize $\cL_{\rm CM}$ for the case of arbitrary vector field $\v_t$ that connects any two densities $\rho_0$ and $\rho_1$ as follows \cite{dou2024unified}:
\begin{align}\label{eq:gcm}
    \min_{\f_t} ~ \cL_{\rm GCM}(\f_t, \v_t) := \bbE_{\substack{t \sim \text{U(0,1)},\\ (\x_0, \x_1) \sim \rho, \\ \widetilde{\x}_t = \J_t(\x_0, \x_1)}} & \norm{\f_{t}\left(\widetilde{\x}_t \right)- \f_{t+\Delta t} \left(\widetilde{\x}_t +  \v_t^{(\rho, \J)}(\widetilde{\x}_t)\Delta t \right)}_2^2.
\end{align}
As in diffusion models, one can obtain an empirical estimate of $\v_t$ as follows:
\begin{align}
    (\x_0, \x_1) \sim \rho, ~ \widetilde{\x}_t = \J_t(\x_0, \x_1), ~ \widehat{\v}_t^{(\rho, \J)}(\widetilde{\x}_t) = \partial_t \J_t(\x_0, \x_1). 
\end{align}
Note that for linear interpolant $\J_t = (1-t)\x_0 + t \x_1 $, $\widehat{\v}_t = \x_1 - \x_0$.
Hence the alternate objective is
\begin{align}
    \min_{\f_t} ~ \widehat{\cL}_{\rm GCM}(\f_t, \v^{(\rho, \J)}) := \bbE_{\substack{t \sim \text{U(0,1)},\\ (\x_0, \x_1) \sim \rho, \\ \widetilde{\x}_t = \J_t(\x_0, \x_1)}} & \norm{\f_{t}\left(\widetilde{\x}_t \right)- \f_{t+\Delta t} \left(\widetilde{\x}_t +  \widehat{\v}_t^{(\rho, \J)}(\widetilde{\x}_t)\Delta t \right)}_2^2
\end{align}

\begin{Fact}\label{fact:optimal_gcm}
    $\cL_{\rm GCM}(\f_t, \v^{(\rho,\J)})$ has a unique minimizer, represented by $\f_t^{(\rho, \J)}$, which is also the minimizer of $\widehat{\cL}_{\rm GCM}(\f_t, \v^{(\rho,\J)})$.
\end{Fact}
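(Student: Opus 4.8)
The plan is to prove the statement in two stages: first characterize the unique minimizer of $\cL_{\rm GCM}$, then show that the same function also minimizes $\widehat{\cL}_{\rm GCM}$. For the first stage, observe that $\cL_{\rm GCM}(\f_t,\v^{(\rho,\J)}) \geq 0$ and that, because $\v_t^{(\rho,\J)}$ is a deterministic function of its argument, the integrand $\norm{\f_t(\widetilde{\x}_t) - \f_{t+\Delta t}(\widetilde{\x}_t + \v_t^{(\rho,\J)}(\widetilde{\x}_t)\Delta t)}_2^2$ can be driven to zero pointwise on the support of $\rho_t$. Hence the minimum value $0$ is attained exactly when $\f_t$ obeys the discrete consistency relation $\f_t(\x) = \f_{t+\Delta t}(\x + \v_t^{(\rho,\J)}(\x)\Delta t)$. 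Imposing the boundary condition $\f_1 = \mathrm{id}$ built into the consistency-model parametrization, this relation is a backward recursion in $t$ that uniquely determines $\f_t$ on the support as the (discrete) flow map of $\v^{(\rho,\J)}$; in the limit $\Delta t \to 0$ this recovers $\f_t^{(\rho,\J)}(\x_t)=\x_1$ as in the scalar consistency-model case. This establishes existence and uniqueness of $\f_t^{(\rho,\J)}$.

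For the second stage, the key identity is the one defining the flow-matching field as a conditional expectation, namely $\v_t^{(\rho,\J)}(\x) = \bbE[\partial_t \J_t(\x_0,\x_1) \mid \J_t(\x_0,\x_1)=\x] = \bbE[\widehat{\v}_t^{(\rho,\J)}(\widetilde{\x}_t)\mid \widetilde{\x}_t=\x]$, so that $\widehat{\v}_t^{(\rho,\J)}$ is an unbiased estimate of $\v_t^{(\rho,\J)}$ given $\widetilde{\x}_t$. Treating $\f_{t+\Delta t}$ as the (stop-gradient) target, as is standard in consistency training, and minimizing over $\f_t$, both losses take the conditional form $\bbE[\,\norm{\f_t(\x)-Z}_2^2 \mid \widetilde{\x}_t=\x]$, whose minimizer is $\f_t(\x)=\bbE[Z\mid \widetilde{\x}_t=\x]$; for $\cL_{\rm GCM}$ the target $Z$ is the deterministic $\f_{t+\Delta t}(\x+\v_t^{(\rho,\J)}(\x)\Delta t)$, whereas for $\widehat{\cL}_{\rm GCM}$ it is the random $\f_{t+\Delta t}(\x+\widehat{\v}_t^{(\rho,\J)}\Delta t)$. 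I would then compare the two conditional means by a first-order expansion in $\Delta t$, writing $\f_{t+\Delta t}(\x+\widehat{\v}_t^{(\rho,\J)}\Delta t)=\f_{t+\Delta t}(\x)+\Delta t\,\nabla_{\x}\f_{t+\Delta t}(\x)\,\widehat{\v}_t^{(\rho,\J)}+O(\Delta t^2)$ and taking the conditional expectation; the unbiasedness $\bbE[\widehat{\v}_t^{(\rho,\J)}\mid\widetilde{\x}_t=\x]=\v_t^{(\rho,\J)}(\x)$ makes the two conditional means agree up to $O(\Delta t^2)$, so the minimizing $\f_t$ coincide in the limit $\Delta t\to 0$ and equal $\f_t^{(\rho,\J)}$.

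The main obstacle is precisely that $\f$ enters the target nonlinearly, through the composition $\f_{t+\Delta t}(\,\cdot + \widehat{\v}_t^{(\rho,\J)}\Delta t)$, so the clean flow-matching argument---expand $\bbE\norm{\f_t(\x)-Z}_2^2 = \norm{\f_t(\x)-\bbE[Z\mid\x]}_2^2 + \mathrm{Var}(Z\mid\x)$ and discard the $\f_t$-independent variance---does not by itself identify the minimizers at finite $\Delta t$, since $\bbE[Z\mid\x]$ is a nonlinear average. The crux is therefore to show that this nonlinearity contributes only at order $\Delta t^2$: the leading ($\Delta t$) contribution is linear in $\widehat{\v}_t^{(\rho,\J)}$, and its conditional mean collapses onto $\v_t^{(\rho,\J)}$ by the zero-mean property $\bbE[\widehat{\v}_t^{(\rho,\J)}-\v_t^{(\rho,\J)}\mid\widetilde{\x}_t]=0$, while the residual conditional-variance term depends only on the fixed target and hence cannot shift the optimal $\f_t$. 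A secondary point worth stating explicitly is that the boundary condition $\f_1=\mathrm{id}$ is needed for uniqueness, since without it any function constant along the trajectories (e.g. $\f_t\equiv c$) attains zero loss.
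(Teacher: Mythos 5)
Your proposal is correct and follows essentially the same route the paper intends: the paper gives no self-contained proof of Fact~\ref{fact:optimal_gcm}, deferring instead to the proof technique of \cite{song2023consistency}, which is exactly your two-stage argument (characterize the zero-loss minimizer via the consistency recursion anchored at the boundary condition $\f_1=\mathrm{id}$, then show $\widehat{\cL}_{\rm GCM}$ agrees with $\cL_{\rm GCM}$ to $O(\Delta t^2)$ by Taylor expansion together with the unbiasedness $\bbE[\widehat{\v}_t^{(\rho,\J)}\mid\widetilde{\x}_t=\x]=\v_t^{(\rho,\J)}(\x)$). Your remark that uniqueness genuinely requires the boundary parametrization $\f_1=\mathrm{id}$ (otherwise any function constant along trajectories attains zero loss) is a point the paper's statement leaves implicit and is worth retaining.
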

Fact \ref{fact:optimal_gcm} can be derived using the same proof technique as in \cite{song2023consistency}. $\widehat{\cL}_{\rm GCM}$ was explored for generative modeling in \cite{dou2024unified}.

\section{Flow based Distribution Matching}

\subsection{Problem Statement}
In many applications, e.g., domain translation and adaptation, it is often required to find a mapping between $\rho_0$ and $\rho_1$ that satisfies from problem-specific constraints. Consider the following optimization problem:
\begin{align}\label{eq:general_problem}
    \minimize_{\g} & ~ {\rm div}([\g]_{\# \rho_0} || \rho_1) \\
    \text{subject to} & ~ \g \in \cG, \nonumber
\end{align}
where ${\rm div}$ represents a distribution divergence operator such that ${\rm div}(p_1 || p_2)=0 \iff p_1 = p_2$ almost everywhere, $\cG$ is some function class specified by the problem, and $\g$ our optimization function that transports $\rho_0$ to $\rho_1$. { Note that although minimizer of $\cL_{\rm GCM}$, $\f_0^{(\rho, \J)}$ can minimize distribution divergence between given source and target distribution, $\f_0^{(\rho, \J)}$ may not be an element of $\cG$.} In the following, we present some examples of Problem \eqref{eq:general_problem}, along with reasons why $\cL_{\rm GCM}$ alone cannot solve \eqref{eq:general_problem}:

\noindent \textbf{Latent representations.} A simple example of \eqref{eq:general_problem} is when $\cG = \{\h: \bbR^D \to \bbR^N\}$ for some $D<N$. This example indicates that $\rho_0$ lives on a lower dimensional space compared to $\rho_1$. Such constraint is useful to find a lower dimensional semantic representation of data, e.g., StyleGAN \cite{karras2019style}. Here, the minimizer of $\cL_{\rm GCM}$, $\f_0^{(\rho, \J)}$ can only take input and output of the same dimension. Hence $\f_0^{(\rho, \J)}$ is not a solution to Problem \eqref{eq:general_problem}. 

\noindent \textbf{Linear Maps.} Another example is $\cG = \{\z \mapsto \A \z | \A \in \bbR^{N \times D}\}$, i.e., a class of linear maps. Such constraints are useful to embed prior information about the transport map. For example, linear map based distribution matching was found useful for unsupervised word translation between different languages \cite{conneau2017word}. However, $\f_0^{(\rho, \J)}$ is unlikely to be a linear map.

\noindent \textbf{Weakly Supervised domain translation.} Consider the setting, where a few paired samples $\{\z^{(i)}, \x_1^{(i)}\}_{i=1}^T$ are available along with a lot of unpaired samples (e.g., weakly supervised machine translation, image-to-image translation, etc). In that case, we would like $\cG = \{ \h : \h(\z^{(i)}) = \x^{(i)}, \forall i \in [T]\}$. Again, $\f_0^{(\rho, \J)}$ is unlikely to satisfy such constraints. 

In general, whenever $\f_0^{(\rho, \J)} \not \in \cG$, problem \eqref{eq:gcm} cannot be used to solve \eqref{eq:general_problem}. For high dimensional data such as images, Problem \eqref{eq:general_problem} is generally tackled using GAN based distribution matching. However, GAN poses a difficult bilevel min-max optimization objective that are known to be highly unstable. Moreover, GANs suffer from mode collapse issues. On the other hand, CNFs have a simple quadratic minimization objective that are stable and scalable. Hence, it is well-motivated to seek for CNF-like objectives to solve Problem \eqref{eq:general_problem}.

\subsection{Proposed Solution}
We propose the following objective in order to solve Problem \eqref{eq:general_problem}:
\begin{align} \label{eq:gcm_gen}
    \min_{\g, \h, \f_t} & ~ \cL_{\rm GCM} (\f_t, {\v}^{(\rho, \J)})  \\
    \text{s.t.} &~ \f_0 \circ \h \in \cG, \nonumber \\
                        &~ \g = \f_0 \circ \h \nonumber \\
                        &~ \rho \in \Pi([\h]_{\# \rho_0}, \rho_1),\nonumber 
\end{align}
where, $\rho$ is the coupling distribution, $\Pi([\h]_{\# \rho_0}, \rho_1)$ denotes the set of joint distribution whose marginals are $[\h]_{\# \rho_0}$ and $\rho_1$, and  $\f_0$ denotes $\f_{t=0}$. Essentially, the vector field connects the distributions $[\h]_{\# \rho_0}$ with $\rho_1$, instead of $\rho_0$ with $\rho_1$. With Problem \eqref{eq:gcm_gen}, one can show the following:

\begin{proposition}
    Suppose that there exists a $\g^\star$ such that $\g^\star \in \cG$ and ${\rm div}([\g^{\star}]_{\# \rho_0} || \rho_1) = 0$. 
    Let $\widehat{\g}, \widehat{\h}, \widehat{\f}_t$ be a solution to \eqref{eq:gcm_gen}. Then $\widehat{\g}$ is also a solution to Problem \eqref{eq:general_problem}. 
\end{proposition}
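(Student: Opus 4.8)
The plan is to show that Problem~\eqref{eq:gcm_gen} attains optimal objective value exactly $0$, and that this value being $0$ forces any optimizer's learned flow map to push the source distribution $[\widehat\h]_{\#\rho_0}$ onto $\rho_1$. Composing with $\widehat\h$ then yields $[\widehat\g]_{\#\rho_0}=\rho_1$, while feasibility of \eqref{eq:gcm_gen} gives $\widehat\g\in\cG$; since the assumed $\g^\star$ certifies that the optimal value of \eqref{eq:general_problem} is $0$, this makes $\widehat\g$ an optimal point of \eqref{eq:general_problem}.

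First I would establish that the optimal value of \eqref{eq:gcm_gen} equals $0$. As $\cL_{\rm GCM}$ is an expectation of a squared norm it is nonnegative, so it suffices to exhibit a single feasible triple attaining $0$. I would take $\widehat\h=\g^\star$; by hypothesis ${\rm div}([\g^\star]_{\#\rho_0}\,\|\,\rho_1)=0$, hence $[\widehat\h]_{\#\rho_0}=\rho_1$, so the source and target of the flow coincide. Choosing the diagonal coupling $\rho$ supported on $\{(\x,\x)\}$ together with the linear interpolant makes $\J_t(\x,\x)=\x$ constant in $t$, so $\widehat\v_t=\x_1-\x_0=0$ and the induced flow is the identity; by Fact~\ref{fact:optimal_gcm} the unique minimizer $\f_t^{(\rho,\J)}$ is then the identity map, giving $\f_0\circ\widehat\h=\g^\star\in\cG$. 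All three constraints hold and $\cL_{\rm GCM}=0$, so the optimal value is $0$.

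Next I would take an arbitrary optimizer $(\widehat\g,\widehat\h,\widehat\f_t)$. By the previous step it attains $\cL_{\rm GCM}(\widehat\f_t,\v^{(\widehat\rho,\J)})=0$, and by Fact~\ref{fact:optimal_gcm} the minimizer of $\cL_{\rm GCM}$ for the coupling $\widehat\rho\in\Pi([\widehat\h]_{\#\rho_0},\rho_1)$ is the unique consistency map $\f_t^{(\widehat\rho,\J)}$, so $\widehat\f_t=\f_t^{(\widehat\rho,\J)}$. As recalled in the consistency-model discussion, this map sends every point of a trajectory to its endpoint, so $\widehat\f_0$ is the time-$0$-to-time-$1$ flow map of $\v^{(\widehat\rho,\J)}$, which by the flow-matching construction transports $[\widehat\h]_{\#\rho_0}$ to $\rho_1$; hence $[\widehat\f_0]_{\#[\widehat\h]_{\#\rho_0}}=\rho_1$. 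Using the composition rule for push-forwards, $[\widehat\g]_{\#\rho_0}=[\widehat\f_0\circ\widehat\h]_{\#\rho_0}=[\widehat\f_0]_{\#[\widehat\h]_{\#\rho_0}}=\rho_1$, so ${\rm div}([\widehat\g]_{\#\rho_0}\,\|\,\rho_1)=0$. The constraints $\widehat\g=\widehat\f_0\circ\widehat\h$ and $\widehat\f_0\circ\widehat\h\in\cG$ give $\widehat\g\in\cG$, so $\widehat\g$ is feasible for \eqref{eq:general_problem} and achieves the minimal possible divergence, namely $0$; therefore it solves \eqref{eq:general_problem}.

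The hard part will be the step identifying $\cL_{\rm GCM}=0$ with the distributional statement $[\widehat\f_0]_{\#[\widehat\h]_{\#\rho_0}}=\rho_1$. This requires care because the consistency characterization is an idealized $\Delta t\to 0$ (noncrossing-trajectory) statement, and one must argue that the unique minimizer's endpoint map is precisely the flow map generated by $\v^{(\widehat\rho,\J)}$, whose push-forward property is exactly the continuity-equation guarantee that $\v^{(\widehat\rho,\J)}$ interpolates $[\widehat\h]_{\#\rho_0}$ and $\rho_1$. A secondary subtlety is confirming that the optimal value is genuinely attained at $0$ rather than being merely an infimum; the explicit diagonal-coupling construction in the second step is intended to settle this by producing an exact minimizer.
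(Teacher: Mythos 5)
Your proof is correct and follows the same overall architecture as the paper's (exhibit a feasible point of \eqref{eq:gcm_gen} with zero loss, conclude any optimizer also attains zero loss, then translate zero consistency loss plus the constraints into optimality for \eqref{eq:general_problem}), but the key construction of the zero-loss certificate is genuinely different. The paper sets $\h^\star = (\f_0^{(\rho,\J)})^{-1}\circ\g^\star$ and keeps the consistency map $\f_t^{(\rho,\J)}$ as-is; this implicitly requires $\f_0^{(\rho,\J)}$ to be invertible and is somewhat circular, since $\rho$ is itself constrained to lie in $\Pi([\h]_{\#\rho_0},\rho_1)$ and hence the map being inverted depends on the $\h^\star$ being defined. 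You instead take $\h^\star=\g^\star$ and exploit the freedom in choosing the coupling: the diagonal coupling yields a zero vector field, hence an identity consistency map, so $\f_0\circ\h^\star=\g^\star\in\cG$ directly. This sidesteps both the invertibility assumption and the circularity, at the price of a different implicit assumption, namely that the (fixed) interpolant satisfies $\J_t(\x,\x)=\x$ so that the diagonal coupling really does freeze the trajectories --- true for the linear interpolant and essentially the condition the paper later makes explicit in Proposition~\ref{prop:main}. You are also more careful than the paper on the final step: the paper jumps from ``$\widehat{\f}_t$ achieves zero $\cL_{\rm GCM}$'' to ``${\rm div}([\widehat{\f}_0\circ\widehat{\h}]_{\#\rho_0}\,\|\,\rho_1)=0$'' in one line, whereas you route this through the uniqueness claim of Fact~\ref{fact:optimal_gcm} (so $\widehat{\f}_t$ must be the consistency map of $\widehat{\rho}$) and the push-forward property of the associated flow map; you correctly flag that this identification is the delicate point, as it rests on the idealized $\Delta t\to 0$ characterization of the consistency function. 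Net: your argument is, if anything, a tightening of the paper's proof rather than a departure from it.
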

\begin{proof}
    From Fact \ref{fact:optimal_gcm}, $\f_t^{(\rho, \J)}$ achieves $\cL_{\rm GCM}(\f_t^{(\rho, \J)}, {\v}^{(\rho, \J)}) = 0$. Next, set $\h^\star = (\f_0^{(\rho, \J)})^{-1} \circ \g^\star$. Then the triplet $\g^\star, \h^\star, \f_t^{(\rho, \J)}$ is a solution to Problem \eqref{eq:gcm_gen} that achieves zero loss. 
    This implies that $\widehat{\f}_t$ also achieves zero $\cL_{\rm GCM}$, i.e., 
    \begin{align*}
    {\rm div}([\widehat{\f}_0 \circ \widehat{\h}]_{\# \rho_0} || \rho_1) = 0.
    \end{align*}
    Hence, $\widehat{\g} = \widehat{\f}_0 \circ \widehat{\h}$ is a solution to Problem \eqref{eq:general_problem}.
\end{proof}

\noindent \textbf{Reformulation of \eqref{eq:gcm_gen}.} The objective in \eqref{eq:general_problem} is a bit hard to deal with since there are three neural networks $\g, \h, \f_t$ involved. Hence, we consider the following reformulation
\begin{align}\label{eq:gcm_gen_reform}
    \min_{\f_t, \g} &~ \mathcal{L}_{\text{FDM}}(\f_t, \g) =& \bbE_{\substack{t \sim \text{U(0,1)},\\ \z, \x_1 \sim \rho( \g(\z), \x_1),\\ \widetilde{\x}_t=\J_t(\g(\z), \x_1)}} &\underbrace{\norm{\f_{t}\left(\widetilde{\x}_t, t \right) - \f_{t + \Delta t}\left(\widetilde{\x}_t + \partial_t \J_t(\g(\z), \x_1) \Delta t \right)}_2^2}_{\text{Consistency Loss}} + \nonumber\\
    & & \underbrace{\norm{ \g(\mathbf{z}) - \f_{0}(\g(\mathbf{z}))}_2^2}_{\text{Generator loss}}, \\
    \text{s.t.} &~ \g \in \cG. \nonumber
\end{align}

\begin{proposition}\label{prop:main}
    Suppose that there exists a $\g^\star$ such that $\g^\star \in \cG$ and ${\rm div}([\g^\star]_{\# \rho_0} || \rho_1) = 0$. Let $\rho, \J_t$ be such that $\f_0^{\rho, \J} = {\rm Id}$ when $[\g]_{\# \rho_0} = \rho_1$, where {\rm Id} denotes the identity function. Let $\widehat{\g},\widehat{\f}_t$ be a solution to \eqref{eq:gcm_gen_reform}. Then $\widehat{\g}$ is also a solution to Problem \eqref{eq:general_problem}. 
\end{proposition}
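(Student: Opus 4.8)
The plan is to exploit that $\cL_{\text{FDM}}$ is a sum of two nonnegative squared-norm terms, so $\cL_{\text{FDM}} \geq 0$ and any pair attaining the value $0$ must annihilate both the consistency loss and the generator loss at once. I would therefore split the argument in two halves: first establish that the optimal value of \eqref{eq:gcm_gen_reform} is exactly $0$ and is attained by a feasible pair, and second show that every zero-loss minimizer $(\widehat{\g}, \widehat{\f}_t)$ forces ${\rm div}([\widehat{\g}]_{\#\rho_0} \,\|\, \rho_1) = 0$, which together with feasibility $\widehat{\g} \in \cG$ certifies that $\widehat{\g}$ solves \eqref{eq:general_problem}.

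For attainability I would use the witness $(\g^\star, \f_t^{(\rho, \J)})$. By hypothesis there is $\g^\star \in \cG$ with ${\rm div}([\g^\star]_{\#\rho_0} \,\|\, \rho_1) = 0$, i.e. $[\g^\star]_{\#\rho_0} = \rho_1$ almost everywhere. Fixing $\g = \g^\star$, the consistency term in \eqref{eq:gcm_gen_reform} is precisely $\cL_{\rm GCM}(\f_t, \v^{(\rho, \J)})$ for the coupling $\rho$ linking $[\g^\star]_{\#\rho_0}$ to $\rho_1$, so by Fact \ref{fact:optimal_gcm} it is driven to $0$ by $\f_t = \f_t^{(\rho, \J)}$. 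Because $[\g^\star]_{\#\rho_0} = \rho_1$, the standing assumption $\f_0^{(\rho, \J)} = {\rm Id}$ then makes the generator term $\bbE\,\norm{\g^\star(\z) - \f_0(\g^\star(\z))}_2^2$ vanish as well. Hence $\cL_{\text{FDM}}(\f_t^{(\rho,\J)}, \g^\star) = 0$ and the optimal value is $0$.

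For the converse, let $(\widehat{\g}, \widehat{\f}_t)$ attain this value; nonnegativity forces both terms to zero. Zero consistency loss, read as an instance of $\cL_{\rm GCM}$ for the $\widehat{\g}$-dependent coupling, lets me invoke Fact \ref{fact:optimal_gcm}: $\widehat{\f}_t$ must equal the unique optimal consistency map for the flow whose source is $[\widehat{\g}]_{\#\rho_0}$ and whose target is $\rho_1$. By the endpoint property of that map (the generalization of $\f_t^\star(\x_t)=\x_1$), $\widehat{\f}_0$ transports the flow's source to its target, i.e. $[\widehat{\f}_0 \circ \widehat{\g}]_{\#\rho_0} = \rho_1$. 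Zero generator loss, on the other hand, gives $\widehat{\f}_0(\widehat{\g}(\z)) = \widehat{\g}(\z)$ for $\rho_0$-almost every $\z$, so $\widehat{\f}_0 \circ \widehat{\g} = \widehat{\g}$ on the support of $[\widehat{\g}]_{\#\rho_0}$. Substituting the second identity into the first yields $[\widehat{\g}]_{\#\rho_0} = \rho_1$, whence ${\rm div}([\widehat{\g}]_{\#\rho_0} \,\|\, \rho_1) = 0$ and the claim follows.

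The main obstacle I anticipate is the bookkeeping around the $\g$-dependence of the vector field: unlike in Fact \ref{fact:optimal_gcm}, where the flow endpoints are fixed, here the source distribution $[\widehat{\g}]_{\#\rho_0}$ moves as $\g$ varies, so I must justify that for each admissible $\g$ the consistency term is a genuine $\cL_{\rm GCM}$ instance before quoting uniqueness and the endpoint-mapping property. A related delicate point is the role of the hypothesis $\f_0^{(\rho, \J)} = {\rm Id}$ at $[\g]_{\#\rho_0} = \rho_1$: it is exactly what reconciles the two loss terms at the optimum, since without it the generator loss need not vanish even at $\g^\star$, the optimal value could be strictly positive, and the ``both terms vanish'' dichotomy would break. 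I would also state the final equalities ($\widehat{\f}_0 \circ \widehat{\g} = \widehat{\g}$ and the pushforward identity) in the almost-everywhere sense to match the equality convention built into ${\rm div}$.
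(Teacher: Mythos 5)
Your proposal is correct and follows essentially the same route as the paper's proof: exhibit the witness pair $(\g^\star, \f_t^{(\rho,\J)})$ to show the optimal value of \eqref{eq:gcm_gen_reform} is zero (using the hypothesis $\f_0^{(\rho,\J)} = {\rm Id}$ to kill the generator term), then argue that any zero-loss minimizer must annihilate both terms, and combine the endpoint property of the consistency map with the identity $\widehat{\f}_0 \circ \widehat{\g} = \widehat{\g}$ to conclude $[\widehat{\g}]_{\#\rho_0} = \rho_1$. Your phrasing of the last step as a substitution on the support of $[\widehat{\g}]_{\#\rho_0}$, rather than the paper's blunter claim that $\widehat{\f}_0 = {\rm Id}$ everywhere, is a minor but welcome refinement.
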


\begin{proof}
    Note that the pair $\g^\star, \f_t^{(\rho, \J)}$ achieves $\cL_{\rm FDM} = 0$ and $\g^\star \in \cG$. This is because $[\g^\star]_{\# \rho_0} = \rho_1$ which implies that $\f_t^{(\rho, \J)} = {\rm Id}$. Therefore, both loss terms in $\cL_{\rm FDM}$ is zero. 
    
    Hence the optimal solution $\widehat{\g}$ and $\widehat{\f}_t$ should be such that 
    \begin{align*}
        \widehat{\g}(\z) &= \widehat{\f}_0(\widehat{\g}(\z)), \forall \z \\
        \implies \widehat{\f}_0 &= {\rm Id}.
    \end{align*}
    Finally, $\widehat{\f}_0 = {\rm Id}$ combined with the fact that consistency loss term is zero implies that $[\widehat{\g}]_{\# \rho_0} = \rho_1$. Hence $\widehat{\g}$ is a solution to Problem \eqref{eq:general_problem}. 
\end{proof}
Above proposition says that for certain choice of coupling $\rho$ and interpolant $\J_t$, Problem \eqref{eq:gcm_gen_reform} is equivalent to Problem \ref{eq:general_problem}. An example of a coupling $\rho$ that satisfies the condition in \ref{prop:main} is the optimal transport coupling. In practice, optimal transport coupling is computationally expensive to compute, hence minibatch optimal transport is often utilized as a proxy \cite{tong2023improving, pooladian2023multisample}.

\noindent \textbf{Algorithm and Practical Implementation.}
To stabilize the training, we first reformulate \eqref{eq:gcm_gen_reform} as follows:
\begin{align}
    \min_{\f_t, \g} ~ \mathcal{L}_{\text{FDM}}(\f_t, \g) = \bbE_{\substack{t \sim \text{U(0,1)},\\ \z, \x_1 \sim \rho( \g(\z), \x_1),\\ \widetilde{\x}_t=\J_t(\g(\z), \x_1)}} &\underbrace{\norm{\f_{t}\left(\widetilde{\x}_t, t \right) - \f_{t + \Delta t}^-\left(\widetilde{\x}_t + \partial_t \J_t(\g^-(\z), \x_1) \Delta t \right)}_2^2}_{\text{Consistency Loss}} + \nonumber\\
    &\underbrace{\norm{ \g(\mathbf{z}) - \f_{0}^-(\g^-(\mathbf{z}))}_2^2}_{\text{Generator loss}},
\end{align}

where $\g^-$ and $\f_t^-$ represent the copies of $\g$ and $\f_t$, respectively, and are treated as constant during gradient-based optimization. Note that we still have moving targets for both $\f_t$ and $\g$ in the consistency loss and generator loss terms, respectively. Hence we alternately optimize $\f_t$ and $\g$ with respect to consistency and generator losses, respectively. Mainly, each iteration of our algorithm consists of the following two sub-optimization steps.
\begin{enumerate}
    \item Optimize $\f_t$ with gradient-based optimizer for $T_f$ iterations
    \item Optimize $\g$ with gradient-based optimizer for $T_g$ iterations.
\end{enumerate}

\section{Simulations}

\subsection{Synthetic Data Experiments}
In order to verify the soundness of our proposed method, we first ran our method in some simulated 2D-data distributions. The purpose is to transform the 2D source distribution using a function $\g$ so that the resulting 2D distribution matches the target distribution. This corresponds to the unconstrained case of Problem \eqref{eq:general_problem}. Note that the consistency model $\f_t$ is used as an auxiliary tool for training $\g$. The results in figure \ref{fig: distribution matching result in simulated data} demonstrates that our proposed method is able to approximately match the target distribution.

\begin{figure}[h!]
    \centering
    \begin{subfigure}{0.49\textwidth}
        \includegraphics[width=\textwidth]{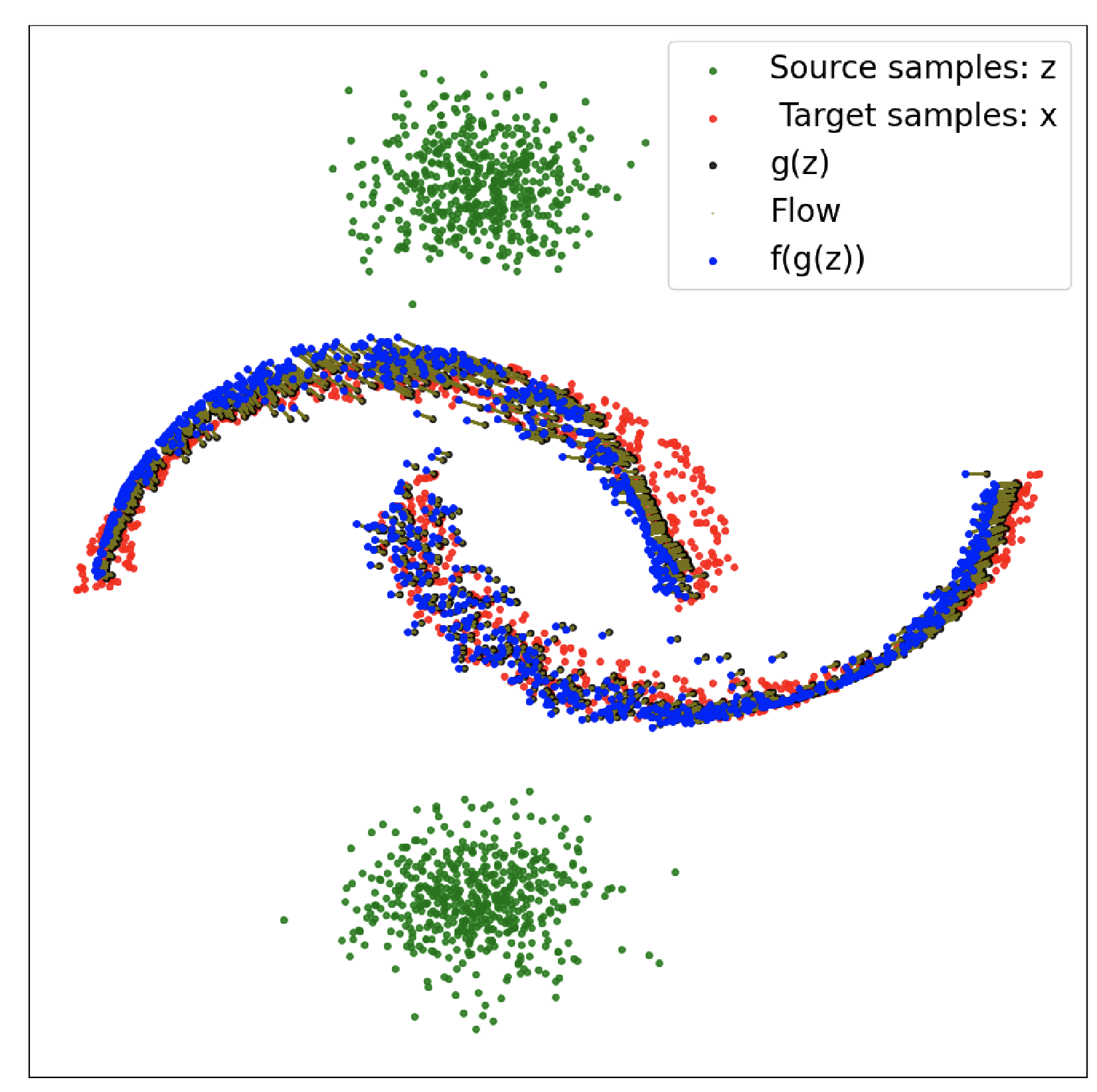}
        \caption{$\rho_0$: 2 Gaussians,\\ $\rho_1$: 2 Moons}
    \end{subfigure}
     \begin{subfigure}{0.49\textwidth}
        \includegraphics[width=\textwidth]{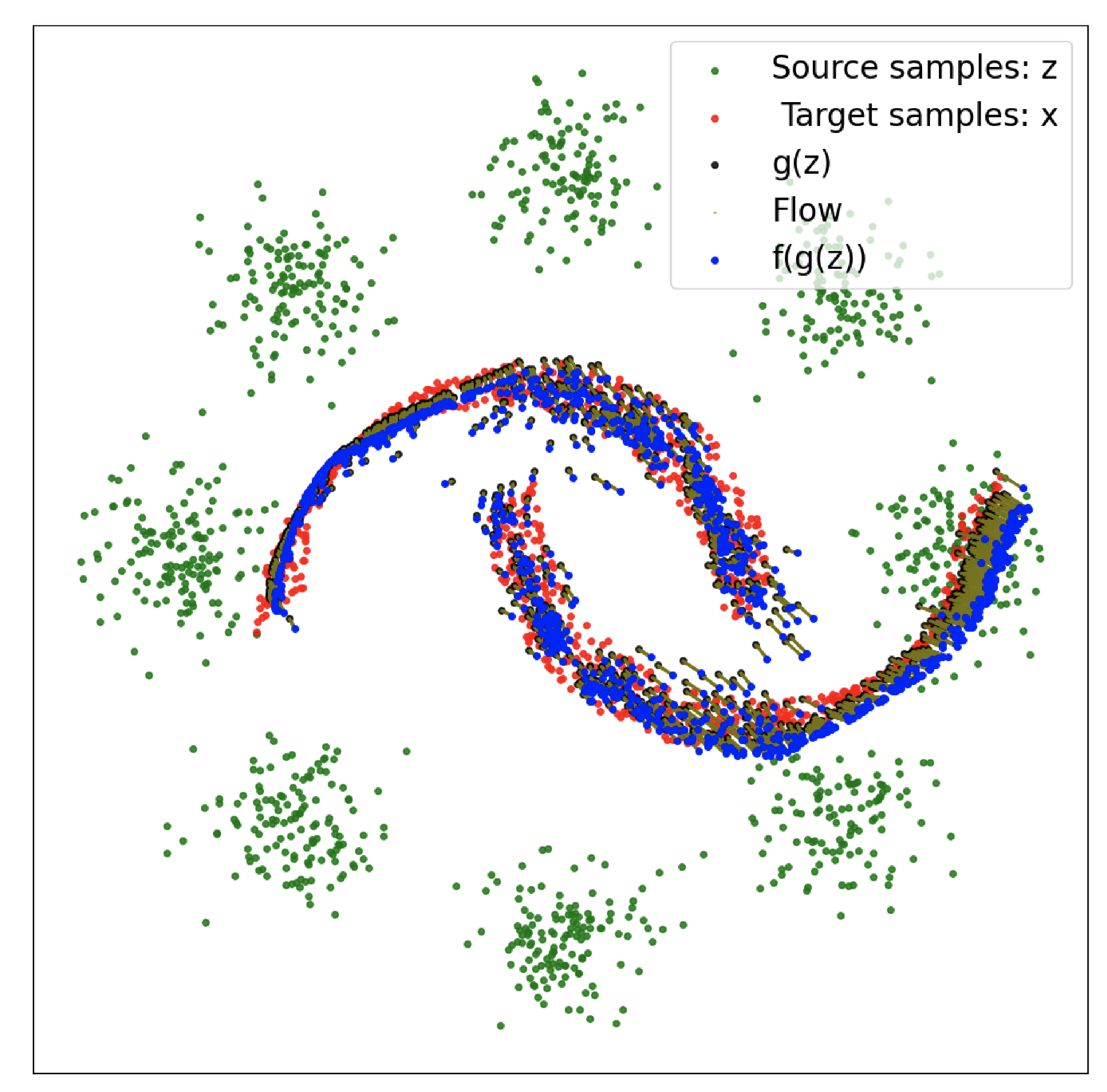}
        \caption{$\rho_0$: 8 Gaussians,\\ $\rho_1$: 2 Moons}
    \end{subfigure}
    \begin{subfigure}{0.49\textwidth}
        \includegraphics[width=\textwidth]{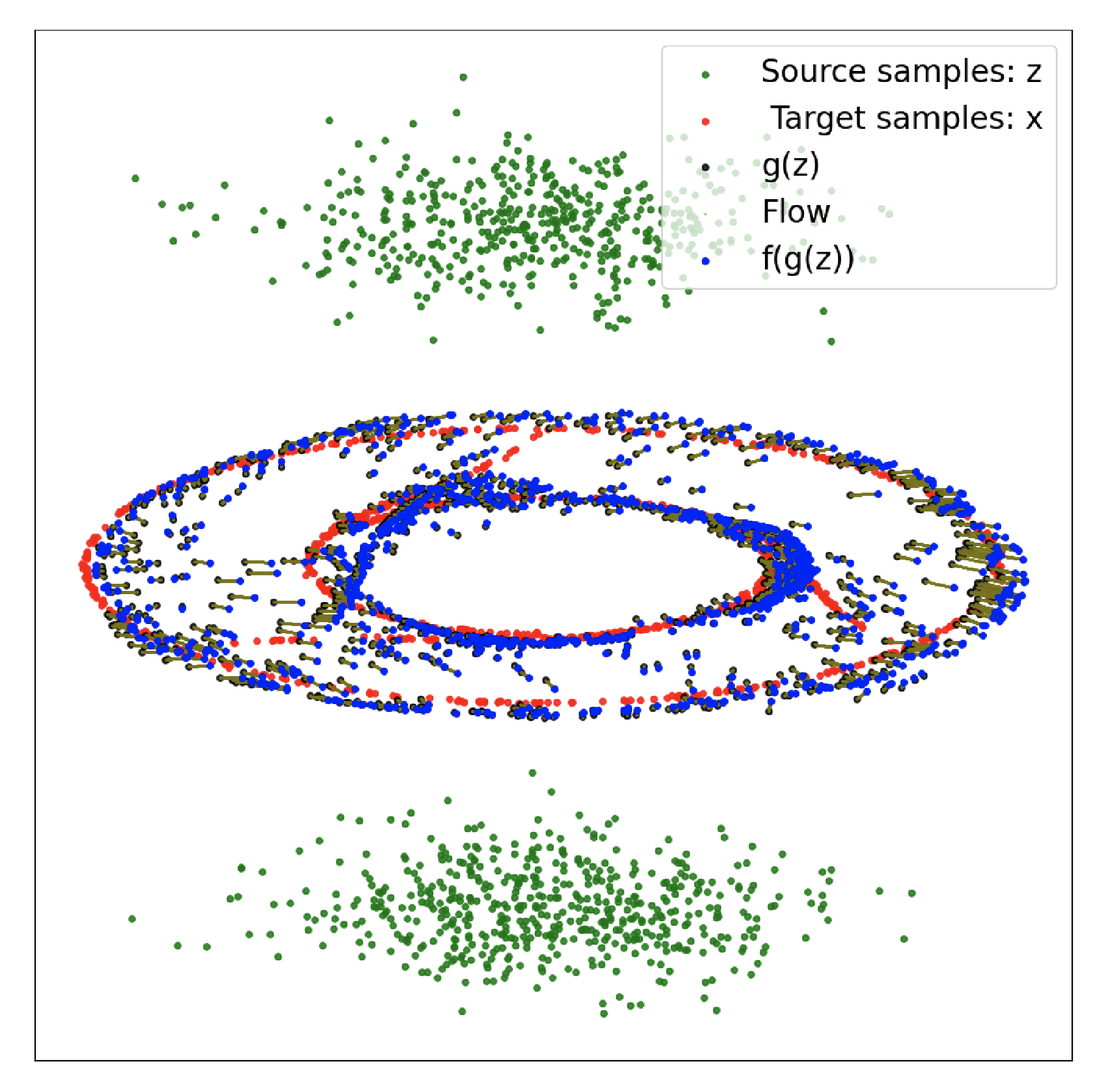}
        \caption{$\rho_0$: 2 Gaussians,\\ $\rho_1$: DiffEqML Logo}
    \end{subfigure}
    \caption{Distribution Matching by our proposed method in 2D simulated data distribution\\ Green: Source Distribution$(\z \sim \rho_0)$, Red: Target Distribution ${\x \sim \rho_1}$, Black: $\g(\z)$, Blue: $\f_0(\g(\z))$ }
    \label{fig: distribution matching result in simulated data}
\end{figure}

\subsection{Real Data Experiments}

\begin{figure}[!h]
    \centering
    \includegraphics[width=0.9\linewidth]{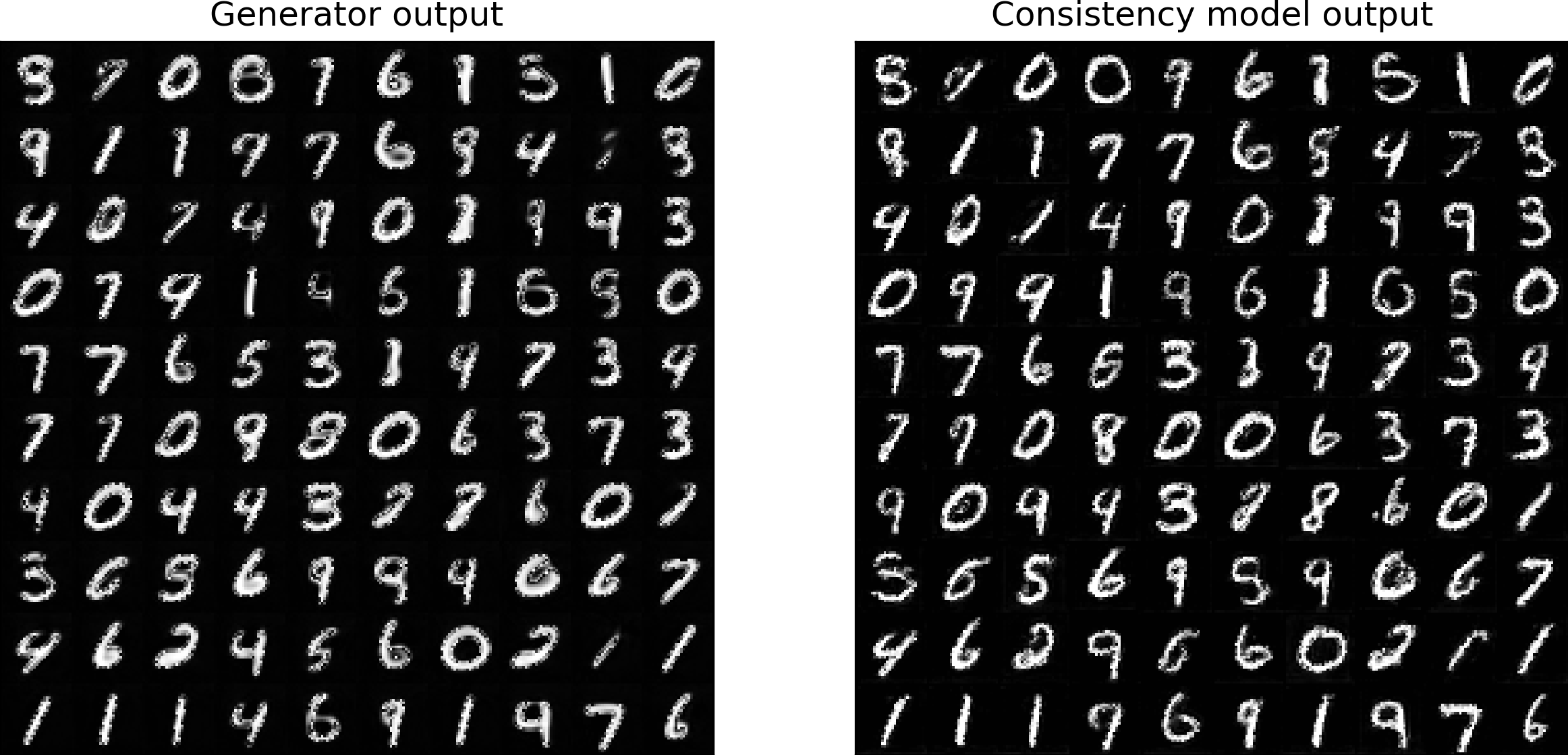}
    \caption{[Left] Random samples from $\g(\z), \z \in \bbR^{256}$. [Right] Random samples from $\f_0(\g(\z))$.}
    \label{fig:mnist}
\end{figure}

We verify the proposed method on a simple image dataset: MNIST digits. We use a convolutional 
generator with input dimension of 256 and output dimension of $28 \times 28$ to represent $\g$. This represents the case where $\cG = \{\bbR^{256} \to \bbR^{28 \times 28}\}$ in Problem \eqref{eq:general_problem}. We use a UNet from \cite{tong2023improving} to represent $\f_t$. We set $T_f = 50$ and $T_g = 20$ and train the neural networks for 100 iterations. Figure \ref{fig:mnist} shows the qualitative result of our method. One can see that the distribution of $\g(\z)$ approximate the MNIST digit distribution.

\section{Discussions and Conclusion}
In this project, we proposed a formulation to perform distribution matching using flow matching. While possessing flexibility in latent manipulation as GAN does, it avoid well-known difficulties in training GAN by relying on a different principle from flow matching and flow consistency property.

Our method showed a promising result on 2D simulated dataset and MNIST dataset. We also demonstrated that our method can be used to train a latent generative model ($\g$), with the dimension of source distribution being much less compared to the target distribution. Due to time constraints, high resolution image datasets could not be explored during the project timeframe.
We believe that with proper tuning and sufficient training, our method should work well in such high-resolution datasets as well. We intend to extend these experiments and further improve our method in the future.

\clearpage

\pagebreak
\bibliographystyle{plainnat}
\bibliography{main}

\end{document}